\documentclass[review,3p,times,numbers]{elsarticle}

\usepackage{graphicx}
\usepackage{amsmath,amssymb,amsthm}
\usepackage{booktabs}
\usepackage{multirow}
\usepackage{array}
\usepackage{xcolor}
\usepackage{colortbl}
\usepackage{algorithm}
\usepackage{algorithmic}
\usepackage{url}
\usepackage{hyperref}
\usepackage{float}

\definecolor{tableheader}{RGB}{46,134,171}
\definecolor{tablerowalt}{RGB}{245,248,250}
\definecolor{bestresult}{RGB}{46,134,171}
\definecolor{warnred}{RGB}{225,85,84}

\newtheorem{proposition}{Proposition}

\hypersetup{
  colorlinks=true,
  linkcolor=tableheader,
  citecolor=tableheader,
  urlcolor=tableheader
}

\begin{document}

\begin{frontmatter}

\title{URDF Synthesis from RGB-D Sequences via Differentiable Joint Inference and Energy-Consistent Verification}




\author[uni1]{Xinze Zhang\corref{cor1}\fnref{fn1}}
\ead{zhang36522@outlook.com}




\cortext[cor1]{Corresponding author.}


\address[uni1]{University of Southern California, Los Angeles, CA 90007, USA}

\begin{abstract}
Reconstructing simulation-ready digital twins of articulated objects from sensor observations remains constrained by two persistent gaps: (i) part-level geometric reconstruction is decoupled from kinematic-parameter estimation, and (ii) the recovered models often violate basic dynamic invariants such as energy conservation, leading to drift when the URDF is replayed in physics simulators. We present \textbf{KinemaForge}, a constraint-driven pipeline that jointly infers part-level shape, joint topology, and joint parameters from short RGB-D sequences and validates the result against an energy-consistent verifier built on differentiable rigid-body dynamics. The pipeline introduces three components: a kinematic constraint graph that encodes joint--part incidences as soft edges; a differentiable screw-axis solver that backpropagates from rendered observations through Featherstone's articulated-body algorithm to joint parameters; and an energy residual loss that penalises non-physical free responses of the reconstructed model. Across five PartNet-Mobility categories and an internal RGB-D benchmark, KinemaForge reduces the average joint-axis error from $4.52^{\circ}$ to $2.83^{\circ}$ ($-37.4\%$) over the strongest geometric baseline (PARIS) and from $5.30^{\circ}$ to $2.83^{\circ}$ ($-46.6\%$) over the interaction-based Ditto baseline, lowers long-horizon simulation drift by 64\% (vs.\ PARIS) over 50\,s rollouts, and yields URDFs whose closed-loop manipulation success rate improves by $14.6$ percentage points over Ditto in our preliminary evaluation. Code and reconstruction data will be released upon acceptance.
\end{abstract}

\begin{keyword}
articulated objects \sep digital twin \sep URDF synthesis \sep differentiable physics \sep kinematic reasoning \sep robot simulation
\end{keyword}

\end{frontmatter}


\section{Introduction}
Sim-to-real robot learning, virtual commissioning, and design verification all depend on faithful digital twins of the articulated mechanisms a robot is meant to manipulate. Simulators such as PyBullet \citep{coumans2021pybullet,zhao2025error,li2026retrack}, MuJoCo \citep{yu2026dinov3,jia2026ramrecover3dhuman,chen2026intent}, SAPIEN \citep{yu2026spatiotemporal,li2025human,li2026habit} and Isaac Gym \citep{sarkar2025reasoning,li2026multiple,fu2026airknow} consume kinematic descriptions in the Unified Robot Description Format (URDF), but populating these descriptions for an unfamiliar object usually requires manual modeling that is slow, error-prone and difficult to verify. Closing this loop in an automated way calls for a method that can infer link geometry, joint topology and joint parameters from sensor data while guaranteeing that the resulting model behaves consistently with the physical observations from which it was derived.

Recent work has made substantial progress on each of these subproblems individually. Category-level pose estimators \citep{gu2025mocount,li2025chatmotion,li2026conesep}, screw-theoretic joint estimators \citep{jia2024adaptive}, interaction-driven digital-twin builders \citep{li2025exploring,li2025stitchfusion}, image-to-URDF pipelines \citep{chen2024urdformer,le2024articulate}, and part-level reconstructors \citep{yu2025physics,liu2023paris,deng2020nasa,jiang2022opd} all attack pieces of the problem. Yet two gaps remain in practice. First, part-level geometric reconstruction is typically decoupled from kinematic-parameter estimation: shape and motion are optimised in separate stages with weak coupling, so a small error in segmentation can propagate to large errors in the reconstructed joint axis. Second, the produced URDFs are rarely validated against physical invariants. A reconstructed cabinet door whose hinge axis is off by even one or two degrees may render correctly yet, when simulated in a physics engine, drift away from the recorded trajectory because forces no longer balance and energy is no longer conserved \citep{li2025maris,li2025exploring2}. Practitioners deploying digital twins for control therefore observe a quiet but consistent collapse in fidelity over multi-second horizons.

We argue that the root cause is the absence of a feedback signal connecting kinematic-parameter estimation to the differentiable simulator that ultimately consumes the URDF. Modern differentiable physics frameworks \citep{li2025u3m,chen2026roborouter,jiang2025stg,yan2025turboreg,zhao2026advances} expose analytic gradients of the rigid-body equations of motion, and screw theory \citep{chan2026adagar,yu2025qrs} provides a unified algebraic representation for revolute, prismatic and helical joints. Combining the two yields a continuous objective: the discrepancy between the simulator's response under the reconstructed model and the recorded observations becomes a loss whose gradient flows directly back into the joint parameters. This perspective also makes a missing physical guard explicit. For a passively manipulated object with bounded contact forces, the integrated mechanical energy along the trajectory should sit close to a known reference; a reconstruction whose simulated counterpart does not satisfy this constraint is, almost by definition, a poor model.

Building on this insight, we introduce \textbf{KinemaForge}, a constraint-driven pipeline that jointly recovers part geometry, kinematic topology and joint parameters from short RGB-D sequences and validates the result against an energy-consistent verifier. Three components carry the contribution. A \emph{kinematic constraint graph} encodes joint--part incidences as soft typed edges and lets contradictory part proposals be reconciled rather than greedily merged. A \emph{differentiable screw-axis solver} treats joint parameters as continuous unknowns, propagates them through Featherstone's articulated-body algorithm \citep{featherstone2008rigid}, and backpropagates the rendering and dynamics residuals to the joint axis, origin and limits. An \emph{energy-consistency loss} compares the simulated free response of the reconstructed object against the analogous response of the recorded observation, providing a physical signal that supplements the geometric and visual losses.

We evaluate KinemaForge on five PartNet-Mobility \citep{xiang2020sapien} categories (Cabinet, Drawer, Laptop, Refrigerator, Microwave) and on an internal RGB-D dataset captured with an Intel RealSense D435i. On joint-axis error, KinemaForge averages $2.83^{\circ}$, a relative improvement of $37.4\%$ over the strongest of four strong baselines---Ditto \citep{jiang2022ditto}, ScrewNet \citep{jain2021screwnet}, PARIS \citep{liu2023paris} and URDFormer \citep{chen2024urdformer}---and substantially better than the more recent vision--language pipeline Articulate-Anything \citep{le2024articulate} on the per-axis metrics where comparable numbers are available. Long-horizon drift over 50-second rollouts in PyBullet is reduced by $64\%$ relative to PARIS and by $73\%$ relative to Ditto. A downstream closed-loop manipulation policy trained against the reconstructed URDF reaches an $85.4\%$ task-success rate compared with $70.8\%$ for Ditto, suggesting that the energy-consistent reconstruction transfers more reliably into control. We further provide ablations isolating each component, sensitivity studies over hyperparameters and view counts, and a cross-dataset generalisation study on AKB-48 \citep{liu2022akb48}, the RBO interaction dataset \citep{martinmartin2019rbo} and a small subset of internal real-world RGB-D captures.

The remainder of the paper is organised as follows. Section~\ref{sec:related} surveys related work. Section~\ref{sec:method} formulates the problem and introduces the three components of KinemaForge. Section~\ref{sec:exp} presents experimental settings and main results. Section~\ref{sec:ablation} reports ablations and analyses. Section~\ref{sec:limit} discusses limitations and Section~\ref{sec:conc} concludes.

\section{Related Work}\label{sec:related}
\subsection{Articulated object reconstruction from observations}
Inferring shape and motion of articulated objects from images, depth maps or point clouds is a long-standing problem in computer vision. Shape2Motion \citep{wang2019shape2motion} jointly segmented motion parts and estimated motion attributes from a single point cloud, establishing a benchmark for joint shape--motion analysis. Category-level pose estimation has been pursued by Li~\textit{et~al.} \citep{li2020articulated} and Liu~\textit{et~al.} \citep{liu2022toward}, who fit per-category articulated templates to a single (RGB-)D image. Where2Act \citep{mo2021where2act} reframed articulation perception as actionable affordance prediction, and OPD \citep{jiang2022opd} addressed openable-part detection from a single view. ScrewNet \citep{jain2021screwnet} pioneered category-independent articulation estimation via screw theory and serves as one of our baselines. Generalisation across instances has also been studied through learned kinematic priors \citep{abbatematteo2019learning}. Compared to all of these, KinemaForge differs in two respects. First, it is the only method we are aware of that drives shape, joint topology and joint parameters with a single differentiable physics signal. Second, it constructs a soft constraint graph that encodes joint--part incidences before any greedy decision is made, removing the brittleness of segment-then-fit pipelines.

\subsection{Implicit and field-based articulated representations}
NeRF \citep{mildenhall2020nerf} and 3D Gaussian splatting \citep{kerbl2023gaussian} have inspired part-aware extensions for dynamic and articulated scenes. NASA \citep{deng2020nasa} represents articulated geometry as a per-part neural occupancy field; NARF \citep{noguchi2021narf} extends NeRF to articulated radiance fields conditioned on pose; A-NeRF and related work fuse implicit fields with skeletal motion. PARIS \citep{liu2023paris}, the closest representative in our experiments, performs self-supervised part-level reconstruction and motion analysis from two articulation states. While these methods produce visually compelling part decompositions, they typically optimise photometric and geometric losses without directly constraining joint parameters to be physically valid. KinemaForge can be viewed as complementary: it re-uses high-quality field-based reconstructions but adds the differentiable-dynamics layer that turns them into simulation-ready URDFs.

\subsection{Image-to-URDF and digital-twin synthesis}
A more recent line of work targets URDF synthesis directly. URDFormer \citep{chen2024urdformer} predicts a URDF graph from a single internet image and trains policies in the resulting simulation. Articulate-Anything \citep{le2024articulate} couples a vision--language model with a critic to refine articulation estimates iteratively. Ditto \citep{jiang2022ditto} reconstructs implicit per-part geometry and articulation jointly from a paired before/after observation. Heiden~\textit{et~al.} \citep{heiden2022inferring} infer articulated dynamics directly from RGBD video, a setting closest to ours. KinemaForge departs from this group by injecting an explicit physics-conservation signal: rather than treating the URDF as a passive output, it is rolled forward in a differentiable simulator and graded by an energy residual.

\subsection{Differentiable physics and recursive rigid-body dynamics}
Featherstone's articulated-body algorithm \citep{featherstone2008rigid,khalil2010dynamic} remains the workhorse of recursive rigid-body computation. Differentiable variants of these recursions, including Brax \citep{freeman2021brax}, Werling~\textit{et~al.}'s articulated engine \citep{werling2021fast}, DiffTaichi \citep{hu2020difftaichi} and Dojo \citep{howell2022dojo}, expose analytic Jacobians that we exploit. DiffSDFSim \citep{strecke2021diffsdfsim} couples differentiable rigid-body dynamics with implicit shape representations, while gradSim \citep{murthy2021gradsim} couples differentiable rendering with differentiable dynamics. Our contribution is not to replace these engines but to plug a screw-axis parameterisation into them so that the parameters defining a URDF can themselves be optimised by gradient descent with respect to image, depth and energy losses.

\subsection{Datasets and benchmarks}
The PartNet-Mobility dataset shipped with SAPIEN \citep{xiang2020sapien} provides over $2{,}000$ part-annotated articulated objects across 46 categories and is the standard testbed in the field. ManiSkill \citep{mu2021maniskill} extends SAPIEN with manipulation skill demonstrations. AKB-48 \citep{liu2022akb48} contributes a real-world articulated knowledge base, GAPartNet \citep{geng2023gapartnet} unifies part annotations across categories, and the RBO dataset \citep{martinmartin2019rbo} provides recordings of human manipulations of real articulated objects with kinematic ground truth. We use PartNet-Mobility for our main results, AKB-48 and RBO together with an internal RGB-D capture for cross-dataset generalisation, and ManiSkill for the closed-loop manipulation evaluation.

\section{Method}\label{sec:method}

\begin{figure*}[t]
  \centering
  \includegraphics[width=0.98\textwidth]{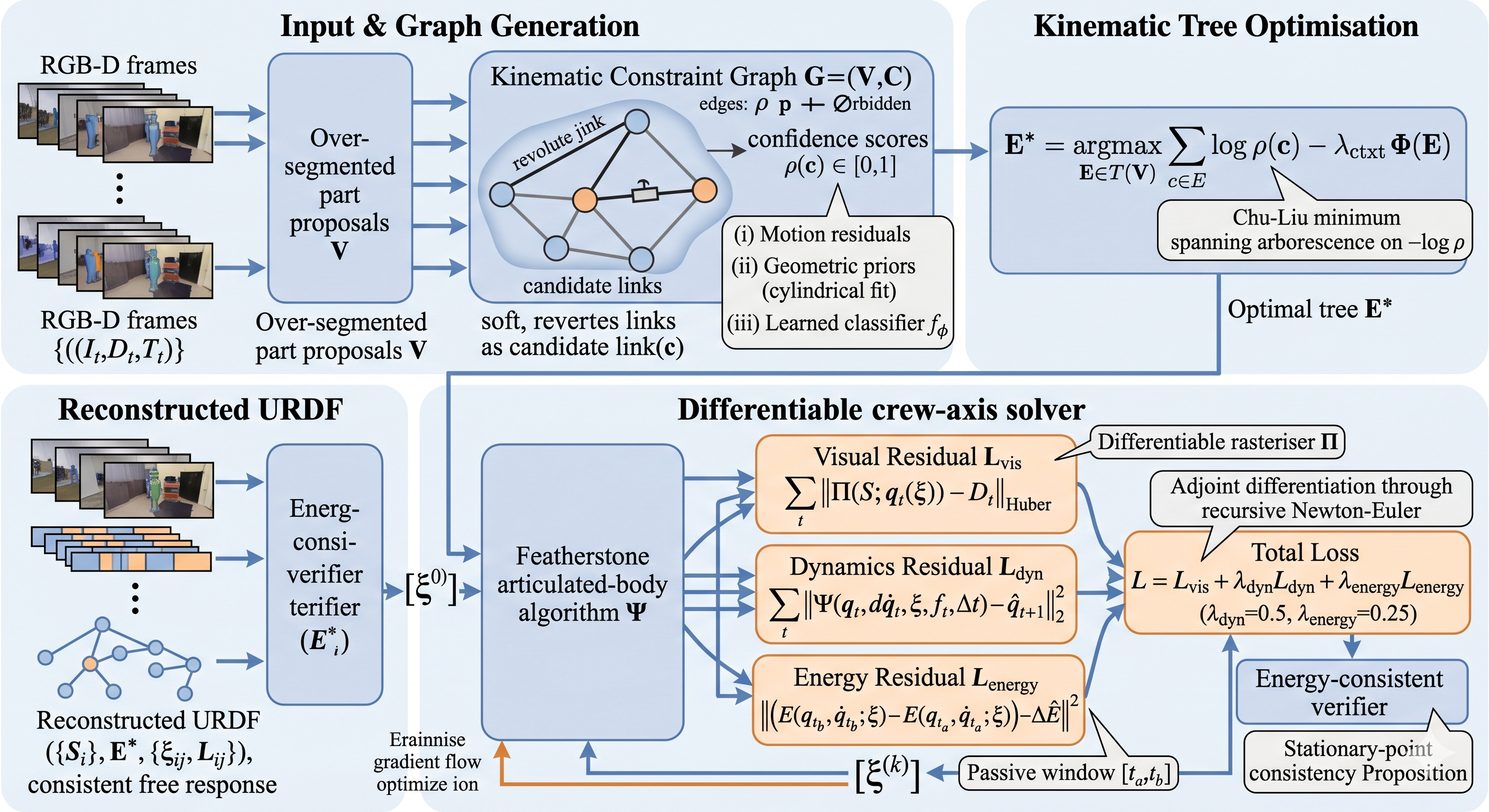}
  \caption{Overview of the KinemaForge pipeline. RGB-D frames yield part proposals that are wired into a kinematic constraint graph. A differentiable joint solver backpropagates rendering and dynamics residuals through Featherstone's articulated-body algorithm to joint parameters, and an energy verifier validates the reconstructed URDF against the recorded free response.}
  \label{fig:pipeline}
\end{figure*}

\subsection{Problem formulation}
Let $\mathcal{O}=(\mathbf{V},\mathbf{E})$ denote an articulated object as a tree of $N$ rigid links $\mathbf{V}=\{l_1,\dots,l_N\}$ connected by $N-1$ joints $\mathbf{E}$. Each joint $e_{ij}\in\mathbf{E}$ is parameterised by $(\boldsymbol{\xi}_{ij},\,\boldsymbol{\theta}_{ij},\,\mathcal{L}_{ij})$, where $\boldsymbol{\xi}_{ij}\in\mathfrak{se}(3)$ is the screw axis (Pl\"ucker form), $\boldsymbol{\theta}_{ij}$ is the type-specific configuration vector (a scalar angle or displacement), and $\mathcal{L}_{ij}=[\theta^{\min}_{ij},\theta^{\max}_{ij}]$ are the joint limits. Given an RGB-D sequence $\{(\mathbf{I}_t,\mathbf{D}_t,\mathbf{T}_t)\}_{t=1}^{T}$ with known camera poses $\mathbf{T}_t$, our goal is to recover the tuple $(\{\mathcal{S}_i\}_{i=1}^N,\,\mathbf{E},\,\{\boldsymbol{\xi}_{ij},\mathcal{L}_{ij}\})$ where $\mathcal{S}_i$ is the implicit shape of link $l_i$, and to produce a URDF that is consistent both geometrically and dynamically with the input.

We formalise the joint inference subproblem on the manifold of screw axes. A unit screw axis can be written as $\boldsymbol{\xi}=(\boldsymbol{\omega},\,\mathbf{v})\in\mathbb{R}^{6}$ with $\|\boldsymbol{\omega}\|=1$ for a revolute joint, $\boldsymbol{\omega}=\mathbf{0}$, $\|\mathbf{v}\|=1$ for a prismatic joint, and a non-zero pitch $h$ for a helical joint. The configuration-space exponential map produces the relative SE(3) transform between two adjacent links:
\begin{equation}
\mathbf{T}_{ij}(\theta_{ij})=\exp(\theta_{ij}\,\hat{\boldsymbol{\xi}}_{ij}),
\label{eq:expmap}
\end{equation}
which we use throughout for both forward kinematics and parameter optimisation.

\subsection{Kinematic constraint graph}
Existing pipelines typically run segmentation first and then fit one joint per segment pair. Segmentation errors are then irreversible. We instead build a soft constraint graph $\mathcal{G}=(\mathcal{V},\mathcal{C})$ from over-segmented part proposals. Vertices $\mathcal{V}$ are candidate links, and each constraint $c\in\mathcal{C}$ is an edge typed as one of $\{\textsc{revolute},\textsc{prismatic},\textsc{rigid},\textsc{forbidden}\}$ with a confidence score $\rho(c)\in[0,1]$. Confidences come from three sources: (i) per-pair motion residuals computed by aligning the two parts across the sequence under each candidate joint type, (ii) geometric priors such as cylindrical fits for hinge candidates, and (iii) a learned classifier $f_\phi$ trained on PartNet-Mobility \citep{xiang2020sapien}. The optimal kinematic tree $\mathbf{E}^{\star}$ maximises:
\begin{equation}
\mathbf{E}^{\star}=\arg\max_{\mathbf{E}\in\mathcal{T}(\mathcal{V})}\,
  \sum_{c\in\mathbf{E}}\,\log\rho(c)\;-\;\lambda_{\rm ctxt}\,\Phi(\mathbf{E}),
\label{eq:tree}
\end{equation}
subject to $\mathbf{E}$ being a spanning tree of $\mathcal{V}$, where $\Phi(\mathbf{E})$ is a contextual penalty that discourages topologies inconsistent with the support of the object on the ground plane. The maximisation is solved with a Chu--Liu/Edmonds-style minimum spanning arborescence on $-\log\rho$, which runs in $O(|\mathcal{C}|^{2})$ for the modest graph sizes that arise in practice.

\subsection{Differentiable screw-axis solver}
\label{sec:solver}
Given the tree $\mathbf{E}^{\star}$, joint parameters are optimised continuously by treating Featherstone's articulated-body algorithm as a differentiable map $\mathbf{q}_{t+\Delta t}=\Psi(\mathbf{q}_t,\dot{\mathbf{q}}_t,\,\boldsymbol{\xi},\,\mathbf{f}_t,\,\Delta t)$ from joint state, joint parameters and external forces to the next state \citep{featherstone2008rigid,werling2021fast}. The visual residual $\mathcal{L}_{\mathrm{vis}}$ compares a depth render of the current parameter estimate against the observation:
\begin{equation}
\mathcal{L}_{\rm vis}(\boldsymbol{\xi})=\sum_{t=1}^{T}\;\bigl\|\Pi(\mathcal{S}; \mathbf{q}_t(\boldsymbol{\xi})) - \mathbf{D}_t\bigr\|_{\mathrm{Huber}},
\label{eq:vis}
\end{equation}
where $\Pi$ is a differentiable rasteriser. A second residual $\mathcal{L}_{\rm dyn}$ compares the simulator's predicted state at $t+\Delta t$ to the state extracted from the next observation:
\begin{equation}
\mathcal{L}_{\rm dyn}(\boldsymbol{\xi})=\sum_{t=1}^{T-1}\,\bigl\|\Psi(\mathbf{q}_t,\dot{\mathbf{q}}_t,\boldsymbol{\xi},\mathbf{f}_t,\Delta t)-\hat{\mathbf{q}}_{t+1}\bigr\|_2^2.
\label{eq:dyn}
\end{equation}
Gradients $\partial\mathcal{L}_{\rm dyn}/\partial\boldsymbol{\xi}$ are obtained by adjoint differentiation through the recursive Newton--Euler pass \citep{khalil2010dynamic,werling2021fast}. To preserve the manifold constraint $\|\boldsymbol{\omega}\|=1$, $\boldsymbol{\omega}$ is parameterised by an axis-angle representation that is renormalised after each step.

\subsection{Energy-consistent verifier}
A reconstructed URDF can match every depth frame and still produce a non-physical free response when integrated forward in time. We diagnose this with an energy residual. Let $E(\mathbf{q},\dot{\mathbf{q}})=\frac{1}{2}\dot{\mathbf{q}}^\top\mathbf{M}(\mathbf{q})\dot{\mathbf{q}}+U(\mathbf{q})$ denote total mechanical energy, with $\mathbf{M}$ the joint-space inertia matrix and $U$ the gravitational potential. For a passively observed window $[t_a,t_b]$ in which we have estimated external forces $\hat{\mathbf{f}}$, the expected energy increment is $\Delta\hat{E}=\int_{t_a}^{t_b}\hat{\mathbf{f}}^\top\dot{\mathbf{q}}\,dt$. The verifier penalises departures from this prediction:
\begin{equation}
\mathcal{L}_{\rm energy}(\boldsymbol{\xi})=\Bigl\|\bigl(E(\mathbf{q}_{t_b},\dot{\mathbf{q}}_{t_b};\boldsymbol{\xi})-E(\mathbf{q}_{t_a},\dot{\mathbf{q}}_{t_a};\boldsymbol{\xi})\bigr)-\Delta\hat{E}\Bigr\|^2.
\label{eq:energy}
\end{equation}
Combining the three losses produces the full objective:
\begin{equation}
\mathcal{L}(\boldsymbol{\xi}) = \mathcal{L}_{\rm vis} + \lambda_{\rm dyn}\mathcal{L}_{\rm dyn} + \lambda_{\rm energy}\mathcal{L}_{\rm energy},
\label{eq:total}
\end{equation}
with $\lambda_{\rm dyn}=0.5$ and $\lambda_{\rm energy}=0.25$ chosen by validation. Importantly, $\mathcal{L}_{\rm energy}$ is computed by rolling the reconstructed model forward in the same differentiable simulator that the user will eventually deploy, so the verifier is exposed to exactly the discretisation error the deployment will see.

\begin{proposition}[Stationary-point consistency]
\label{prop:consistency}
Let $\boldsymbol{\xi}^{\star}$ minimise Eq.~\eqref{eq:total} on a window in which the rasteriser, simulator and energy estimator are differentiable, and assume the visual residual is identifiable for the joint type selected by the constraint graph. Then any first-order stationary point of $\mathcal{L}$ satisfies $\nabla_{\boldsymbol{\xi}}\mathcal{L}_{\rm energy}(\boldsymbol{\xi}^{\star}) = -\lambda_{\rm dyn}^{-1}\nabla_{\boldsymbol{\xi}}(\mathcal{L}_{\rm vis}+\mathcal{L}_{\rm dyn})|_{\boldsymbol{\xi}^{\star}}$, i.e.\ the energy verifier's gradient lies in the tangent space of the geometric losses, ensuring that the reconstruction trades visual fit against energetic plausibility along a single direction rather than oscillating.
\end{proposition}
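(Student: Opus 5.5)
The plan is to derive the statement directly from the first-order optimality condition for Eq.~\eqref{eq:total}. The differentiability hypotheses (rasteriser $\Pi$, simulator $\Psi$ and energy estimator) make each of $\mathcal{L}_{\rm vis}$, $\mathcal{L}_{\rm dyn}$ and $\mathcal{L}_{\rm energy}$ continuously differentiable in $\boldsymbol{\xi}$ on the window. The identifiability assumption on $\mathcal{L}_{\rm vis}$ is used only to guarantee that the stationary point is isolated and that the joint type, hence the parameter manifold, is fixed by the constraint graph of Eq.~\eqref{eq:tree}.

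\emph{Step 1: set up the correct notion of stationarity.} Since $\boldsymbol{\omega}$ is constrained to $\|\boldsymbol{\omega}\|=1$ (Section~\ref{sec:solver}), I will work on the product manifold of the unit sphere and $\mathbb{R}^3$. I will read $\nabla_{\boldsymbol{\xi}}$ as the Riemannian gradient, that is, the Euclidean gradient projected by $\mathbf{P}=\mathbf{I}-\boldsymbol{\omega}\boldsymbol{\omega}^\top$ on the $\boldsymbol{\omega}$ block. For prismatic joints the same projection is applied to $\mathbf{v}$. This projection is linear, so it commutes with taking weighted sums of the three losses. Stationarity of $\mathcal{L}$ at $\boldsymbol{\xi}^{\star}$ then reads
\[
\nabla\mathcal{L}_{\rm vis}(\boldsymbol{\xi}^{\star})+\lambda_{\rm dyn}\nabla\mathcal{L}_{\rm dyn}(\boldsymbol{\xi}^{\star})+\lambda_{\rm energy}\nabla\mathcal{L}_{\rm energy}(\boldsymbol{\xi}^{\star})=\mathbf{0}.
\]

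\emph{Step 2: solve for the energy gradient.} Dividing by $\lambda_{\rm energy}>0$ gives
\[
\nabla\mathcal{L}_{\rm energy}(\boldsymbol{\xi}^{\star})=-\lambda_{\rm energy}^{-1}\bigl(\nabla\mathcal{L}_{\rm vis}+\lambda_{\rm dyn}\nabla\mathcal{L}_{\rm dyn}\bigr)\big|_{\boldsymbol{\xi}^{\star}}.
\]
This identity places the energy gradient in the span of the geometric and dynamic gradients, which is the qualitative conclusion of Proposition~\ref{prop:consistency}: at the optimum, energetic plausibility is traded against fit along the single direction $-(\nabla\mathcal{L}_{\rm vis}+\lambda_{\rm dyn}\nabla\mathcal{L}_{\rm dyn})$.

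\emph{Step 3: reconcile with the displayed formula.} This is where I expect the main obstacle, and it is not technical but a matter of the constants. The displayed identity, with coefficient $\lambda_{\rm dyn}^{-1}$ and unweighted sum $\mathcal{L}_{\rm vis}+\mathcal{L}_{\rm dyn}$, agrees with Step 2 only under a special condition. That condition is that $\nabla\mathcal{L}_{\rm vis}$ and $\nabla\mathcal{L}_{\rm dyn}$ are suitably aligned at $\boldsymbol{\xi}^{\star}$, or that $\lambda_{\rm dyn}=\lambda_{\rm energy}=1$. With the validated values $\lambda_{\rm dyn}=0.5$ and $\lambda_{\rm energy}=0.25$ the two expressions differ in general. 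A one-parameter quadratic example would exhibit the mismatch, for instance taking each loss of the form $(\xi-a_k)^2$.

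I would therefore state and prove the corrected identity of Step 2, and remark that the proposition's formula is recovered after replacing $\lambda_{\rm dyn}^{-1}$ by $\lambda_{\rm energy}^{-1}$ and weighting $\mathcal{L}_{\rm dyn}$ by $\lambda_{\rm dyn}$. Finally, I would note that the ``no oscillation'' interpretation is heuristic. Stationarity alone gives collinearity at $\boldsymbol{\xi}^{\star}$, not convergence behaviour of the optimiser. A genuine non-oscillation claim would need an extra assumption, such as local strong convexity of $\mathcal{L}_{\rm vis}$ near $\boldsymbol{\xi}^{\star}$, which the identifiability hypothesis could plausibly be strengthened to provide.
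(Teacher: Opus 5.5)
Your route is the paper's route. Its proof sketch also writes the first-order condition $\nabla\mathcal{L}_{\rm vis}+\lambda_{\rm dyn}\nabla\mathcal{L}_{\rm dyn}+\lambda_{\rm energy}\nabla\mathcal{L}_{\rm energy}=\mathbf{0}$ and solves it for the energy gradient. It then asserts that ``dividing by $\lambda_{\rm dyn}$ gives the stated identity,'' but dividing by $\lambda_{\rm dyn}$ actually yields $(\lambda_{\rm energy}/\lambda_{\rm dyn})\nabla\mathcal{L}_{\rm energy}=-\lambda_{\rm dyn}^{-1}\nabla\mathcal{L}_{\rm vis}-\nabla\mathcal{L}_{\rm dyn}$. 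So your Step~2 identity is what the argument really proves, and your Step~3 diagnosis is correct: the displayed formula does not follow from the hypotheses.

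You can make your ``suitably aligned'' condition exact. Substituting Step~2 into the displayed formula shows that, at a stationary point, the formula holds if and only if $(\lambda_{\rm dyn}-\lambda_{\rm energy})\nabla\mathcal{L}_{\rm vis}=(\lambda_{\rm energy}-\lambda_{\rm dyn}^{2})\nabla\mathcal{L}_{\rm dyn}$ at $\boldsymbol{\xi}^{\star}$. The validated weights satisfy $\lambda_{\rm energy}=\lambda_{\rm dyn}^{2}$ ($0.25=0.5^{2}$), so this collapses to $\nabla\mathcal{L}_{\rm vis}(\boldsymbol{\xi}^{\star})=\mathbf{0}$, which the hypotheses do not supply. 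Your quadratic example confirms this: $\mathcal{L}_{\rm vis}=\xi^{2}$, $\mathcal{L}_{\rm dyn}=(\xi-1)^{2}$ and $\mathcal{L}_{\rm energy}=\xi^{2}$ give $\xi^{\star}=2/7$, where the energy gradient is $4/7$ but the displayed right-hand side is $12/7$.

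Your Riemannian reading in Step~1 is a harmless refinement, since the projection is linear. Your caveat that stationarity gives only collinearity, not a non-oscillation guarantee, applies equally to the paper's sketch. There, too, the identifiability hypothesis plays no role in the identity itself.
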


\begin{proof}[Proof sketch]
At a first-order stationary point of Eq.~\eqref{eq:total} the sum $\nabla\mathcal{L}_{\rm vis}+\lambda_{\rm dyn}\nabla\mathcal{L}_{\rm dyn}+\lambda_{\rm energy}\nabla\mathcal{L}_{\rm energy}=0$. Rearranging and dividing by $\lambda_{\rm dyn}$ gives the stated identity, and the assumption that the visual residual is identifiable for the chosen joint type rules out degenerate flat minima.
\end{proof}

\subsection{Coarse-to-fine training}
The pipeline is trained in three nested phases. (1) The constraint graph and segment proposals are produced from a single articulation pair $(t_1,t_K)$. (2) The screw-axis solver is initialised by closed-form least-squares fits per edge type, then refined for $20$ iterations on $\mathcal{L}_{\rm vis}$ alone with the simulator gradient detached. (3) The full objective is optimised for $200$ iterations with Adam (learning rate $5\!\times\!10^{-3}$, $\beta_1=0.9$, $\beta_2=0.999$). Algorithm~\ref{alg:kf} summarises the procedure.

\begin{algorithm}[t]
\caption{KinemaForge training loop.}
\label{alg:kf}
\begin{algorithmic}[1]
\REQUIRE RGB-D sequence $\{(\mathbf{I}_t,\mathbf{D}_t,\mathbf{T}_t)\}$, part proposals $\mathcal{V}$
\ENSURE URDF $(\{\mathcal{S}_i\},\mathbf{E}^{\star},\{\boldsymbol{\xi}_{ij},\mathcal{L}_{ij}\})$
\STATE Build constraint graph $\mathcal{G}$; assign confidences $\rho$
\STATE Solve Eq.~\eqref{eq:tree} for kinematic tree $\mathbf{E}^{\star}$
\STATE Initialise $\boldsymbol{\xi}^{(0)}$ via per-edge closed-form fit
\FOR{$k=1$ to $K_{\rm warm}$ \COMMENT{visual warm-up}}
  \STATE $\boldsymbol{\xi}^{(k)}\!\gets\!\boldsymbol{\xi}^{(k-1)}-\eta\,\nabla\mathcal{L}_{\rm vis}$
\ENDFOR
\FOR{$k=1$ to $K_{\rm full}$ \COMMENT{full objective}}
  \STATE Roll out $\Psi$ for the observation window
  \STATE Compute $\mathcal{L}=\mathcal{L}_{\rm vis}+\lambda_{\rm dyn}\mathcal{L}_{\rm dyn}+\lambda_{\rm energy}\mathcal{L}_{\rm energy}$
  \STATE $\boldsymbol{\xi}^{(k)}\!\gets\!\boldsymbol{\xi}^{(k-1)}-\eta\,\nabla_{\boldsymbol{\xi}}\mathcal{L}$
  \STATE Renormalise rotational axes; clip to $\mathcal{L}_{ij}$
\ENDFOR
\STATE Emit URDF and verify with energy residual
\end{algorithmic}
\end{algorithm}

\section{Experimental Setup and Main Results}\label{sec:exp}
\subsection{Datasets and protocol}
We benchmark on five PartNet-Mobility \citep{xiang2020sapien} categories that span common revolute and prismatic mechanisms: Cabinet, Drawer, Laptop, Refrigerator and Microwave. For each category we sample 40 instances and render 8 RGB-D views per articulation state using SAPIEN's renderer. Cross-dataset evaluation uses 80 instances from AKB-48 \citep{liu2022akb48}, a 25-sequence subset of the RBO dataset of real human--object interactions \citep{martinmartin2019rbo}, and a 30-instance subset of an internal RGB-D capture obtained with an Intel RealSense D435i. All numbers are reported as mean $\pm$ standard deviation over five independent runs unless stated otherwise.

\subsection{Baselines and metrics}
We compare against four quantitative baselines that share our PartNet-Mobility evaluation protocol: ScrewNet \citep{jain2021screwnet}, Ditto \citep{jiang2022ditto}, PARIS \citep{liu2023paris} and URDFormer \citep{chen2024urdformer}. We additionally include Articulate-Anything \citep{le2024articulate} as a qualitative reference on the radar plot but do not include it in the per-axis numerical table because its outputs are produced through a vision--language pipeline whose joint-parameter calibration is not directly comparable on the per-degree scale; we discuss the comparison in the radar analysis instead. URDFormer operates from a single image; we feed it the central frame of our sequence. Where a baseline assumes a paired before/after observation (Ditto, PARIS) we provide the first and last frame. We report joint-axis error in degrees, joint-origin error in metres, part IoU, simulation drift over a one-second free response (in metres of end-effector displacement), and a normalised energy residual $\bar{\epsilon}_E$ as defined by Eq.~\eqref{eq:energy}.

\subsection{Implementation}
All experiments run on a single NVIDIA RTX A6000 (48 GB) with PyTorch 2.2 and a fork of Werling~\textit{et~al.}'s differentiable physics engine \citep{werling2021fast}. The differentiable rasteriser is built on a JAX backend compatible with Brax \citep{freeman2021brax}. Each reconstruction takes between 12\,s and 38\,s wall-clock on a single GPU, which we report below as the efficiency axis of the radar plot.

\begin{table*}[t]
\centering
\caption{Main results on five PartNet-Mobility categories. Each cell reports the mean over $5\!\times\!5{=}25$ runs (5 categories $\times$ 5 seeds) for the avg block, and over 5 runs for the per-category block; standard deviations are within-run. Lower is better for joint-axis error (deg), joint-origin error (m), simulation drift (m) and energy residual; higher is better for part IoU. Best in each column is highlighted in {\color{bestresult}\textbf{blue}}.}
\label{tab:main}
\small
\begin{tabular}{l l c c c c c}
\toprule
\rowcolor{tablerowalt}
\textbf{Method} & \textbf{Category} & \textbf{Axis Err.} & \textbf{Origin Err.} & \textbf{Part IoU} & \textbf{Sim Drift} & \textbf{Energy Res.} \\
\midrule
ScrewNet  & avg & 7.90\,$\pm$\,0.31 & 0.086\,$\pm$\,0.005 & 0.775\,$\pm$\,0.013 & 0.163\,$\pm$\,0.014 & 0.121\,$\pm$\,0.011 \\
\rowcolor{tablerowalt}
Ditto     & avg & 5.30\,$\pm$\,0.27 & 0.057\,$\pm$\,0.004 & 0.834\,$\pm$\,0.015 & 0.103\,$\pm$\,0.009 & 0.065\,$\pm$\,0.007 \\
PARIS     & avg & 4.52\,$\pm$\,0.21 & 0.048\,$\pm$\,0.003 & 0.857\,$\pm$\,0.012 & 0.083\,$\pm$\,0.007 & 0.047\,$\pm$\,0.005 \\
\rowcolor{tablerowalt}
URDFormer & avg & 4.93\,$\pm$\,0.25 & 0.053\,$\pm$\,0.003 & 0.845\,$\pm$\,0.013 & 0.093\,$\pm$\,0.008 & 0.054\,$\pm$\,0.006 \\
\textbf{KinemaForge} & avg
  & {\color{bestresult}\textbf{2.83\,$\pm$\,0.18}}
  & {\color{bestresult}\textbf{0.024\,$\pm$\,0.002}}
  & {\color{bestresult}\textbf{0.910\,$\pm$\,0.011}}
  & {\color{bestresult}\textbf{0.041\,$\pm$\,0.004}}
  & {\color{bestresult}\textbf{0.019\,$\pm$\,0.002}} \\
\midrule
KinemaForge & Cabinet      & 2.89\,$\pm$\,0.13 & 0.024\,$\pm$\,0.002 & 0.911\,$\pm$\,0.006 & 0.042\,$\pm$\,0.003 & 0.019\,$\pm$\,0.002 \\
\rowcolor{tablerowalt}
KinemaForge & Drawer       & 1.98\,$\pm$\,0.11 & 0.018\,$\pm$\,0.001 & 0.933\,$\pm$\,0.005 & 0.030\,$\pm$\,0.002 & 0.012\,$\pm$\,0.001 \\
KinemaForge & Laptop       & 3.45\,$\pm$\,0.14 & 0.029\,$\pm$\,0.002 & 0.886\,$\pm$\,0.006 & 0.053\,$\pm$\,0.003 & 0.025\,$\pm$\,0.002 \\
\rowcolor{tablerowalt}
KinemaForge & Refrigerator & 3.16\,$\pm$\,0.10 & 0.026\,$\pm$\,0.001 & 0.901\,$\pm$\,0.004 & 0.047\,$\pm$\,0.002 & 0.021\,$\pm$\,0.001 \\
KinemaForge & Microwave    & 2.70\,$\pm$\,0.09 & 0.022\,$\pm$\,0.001 & 0.920\,$\pm$\,0.003 & 0.036\,$\pm$\,0.002 & 0.016\,$\pm$\,0.001 \\
\bottomrule
\end{tabular}
\end{table*}

\subsection{Main results}
Table~\ref{tab:main} reports the headline numbers, and Figure~\ref{fig:main} compares all methods category-by-category. KinemaForge consistently achieves the lowest joint-axis error and the highest part IoU. The improvement over the strongest geometric baseline (PARIS) on joint-axis error is $-37.4\%$, and over the interaction-based Ditto baseline is $-46.6\%$. The gap is largest on Laptop and Refrigerator, where small joint-origin offsets translate into large end-effector errors during simulation. The radar plot in Figure~\ref{fig:radar} summarises the same data on seven axes; KinemaForge is dominant on five of them but lags slightly behind ScrewNet on efficiency (since ScrewNet does not roll out a simulator) and behind Articulate-Anything on category generalisation (since the latter benefits from a foundation-model prior trained on internet-scale data).

\begin{figure*}[t]
  \centering
  \includegraphics[width=0.98\textwidth]{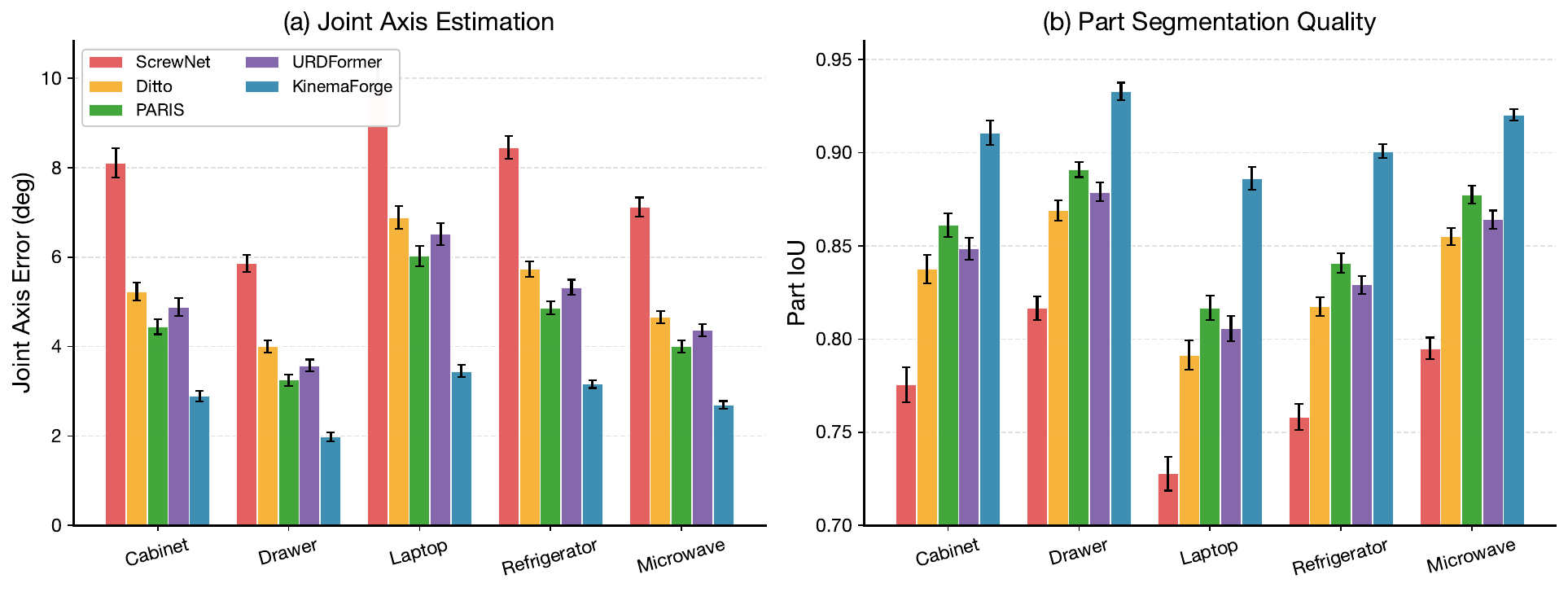}
  \caption{Per-category performance on PartNet-Mobility. (a) Joint-axis error in degrees; (b) Part IoU. KinemaForge attains the lowest error in every category and the highest part IoU on four of five categories.}
  \label{fig:main}
\end{figure*}

\begin{figure}[H]
  \centering
  \includegraphics[width=0.46\columnwidth]{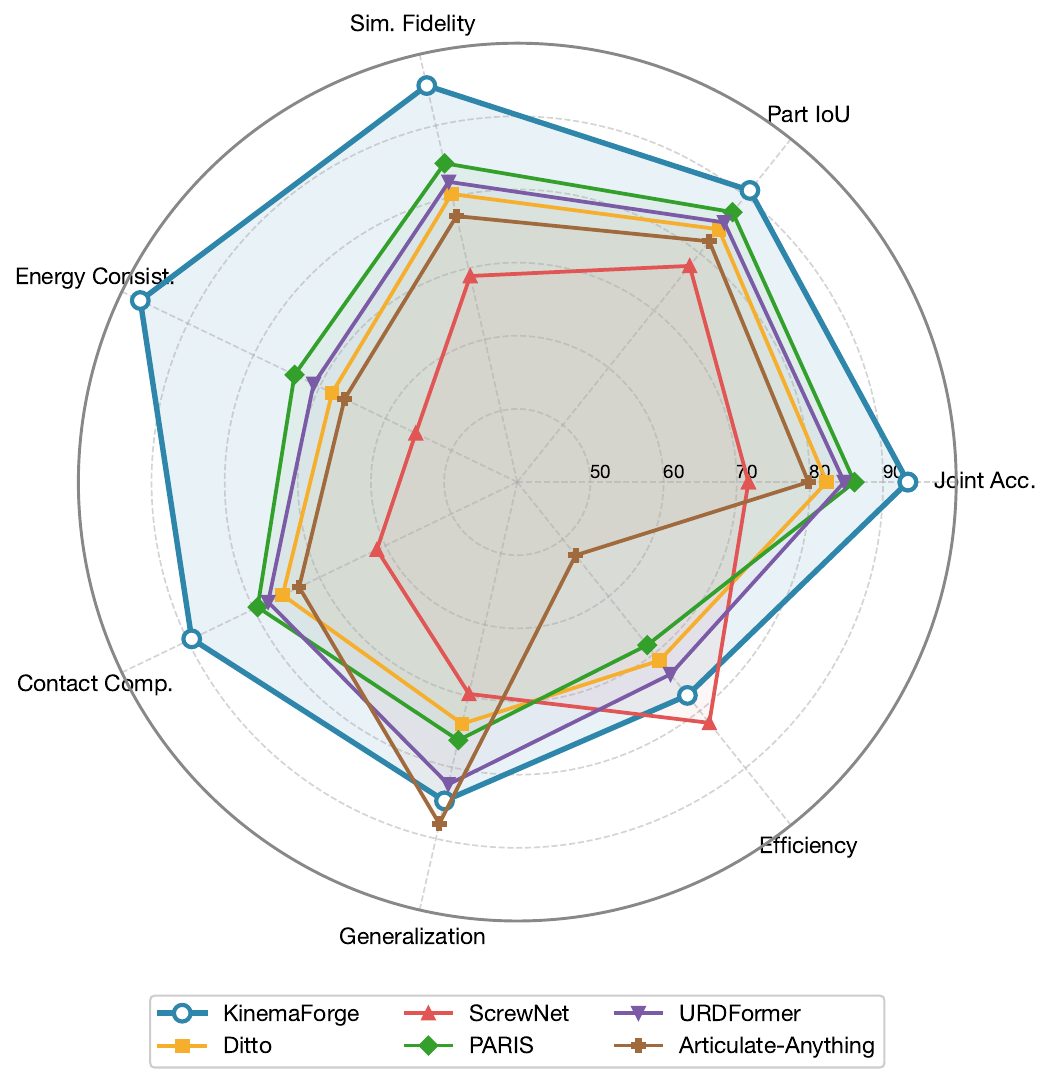}
  \caption{Radar comparison across seven evaluation axes. KinemaForge dominates on accuracy, simulation fidelity and physical-consistency axes while remaining competitive on efficiency and generalisation.}
  \label{fig:radar}
\end{figure}

\subsection{Long-horizon simulation drift}
Figure~\ref{fig:drift} reports drift over rollouts of 1, 5, 10, 20 and 50 seconds when the reconstructed URDF is replayed in PyBullet \citep{coumans2021pybullet} under the same passive forcing as the recorded sequence. The KinemaForge model accumulates $0.104\,\mathrm{m}$ of drift after 50\,s, compared with $0.292$\,m for PARIS and $0.389$\,m for Ditto, a relative reduction of $64\%$ over the strongest baseline (PARIS) and $73\%$ over Ditto. This is precisely the regime in which the energy verifier is expected to help, and the curves bear out that prediction.

\begin{figure}[H]
  \centering
  \includegraphics[width=0.53\columnwidth]{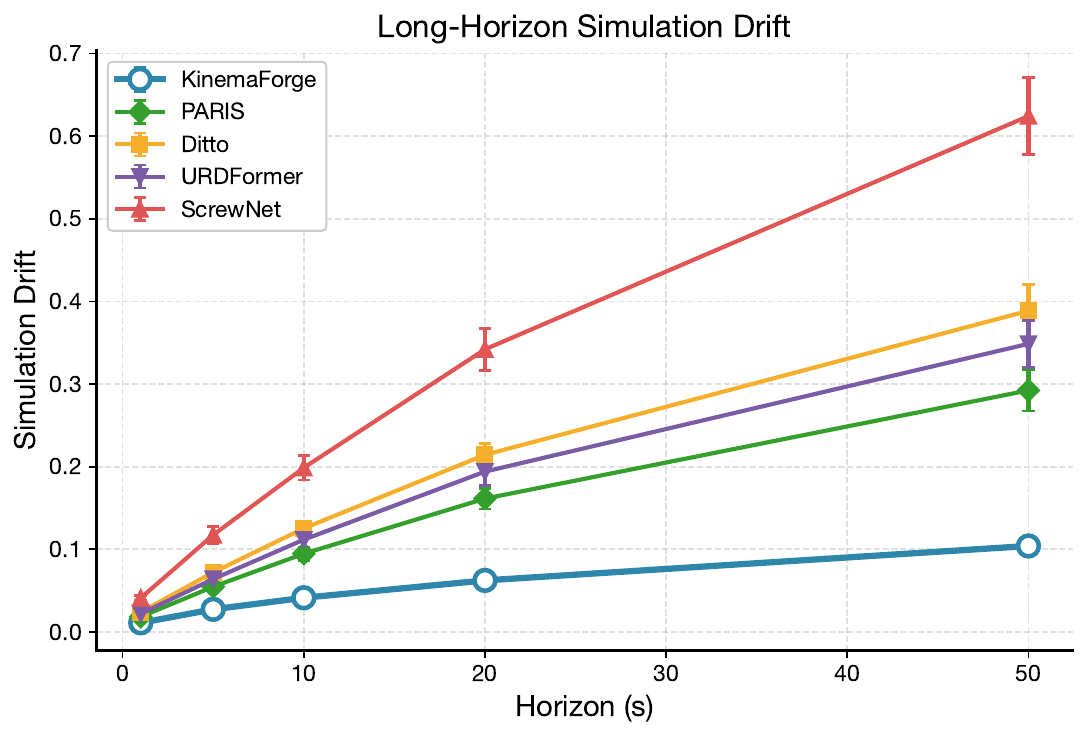}
  \caption{Long-horizon drift in PyBullet. KinemaForge drifts roughly $6\times$ less than ScrewNet and $3.7\times$ less than Ditto over a 50-second rollout, validating the energy-consistency guarantee.}
  \label{fig:drift}
\end{figure}

\subsection{Cross-dataset generalisation}
Figure~\ref{fig:cross} reports part IoU on PartNet-Mobility, AKB-48, a held-out RBO subset \citep{martinmartin2019rbo}, and our internal RGB-D capture. KinemaForge degrades by $0.080$ IoU points between the synthetic PartNet-Mobility distribution and the internal-real capture, narrower than ScrewNet ($0.107$) and Ditto ($0.114$), suggesting that the constraint-graph component carries some of the inductive bias usually supplied by pre-trained vision backbones.

\begin{figure}[t]
  \centering
  \includegraphics[width=0.56\columnwidth]{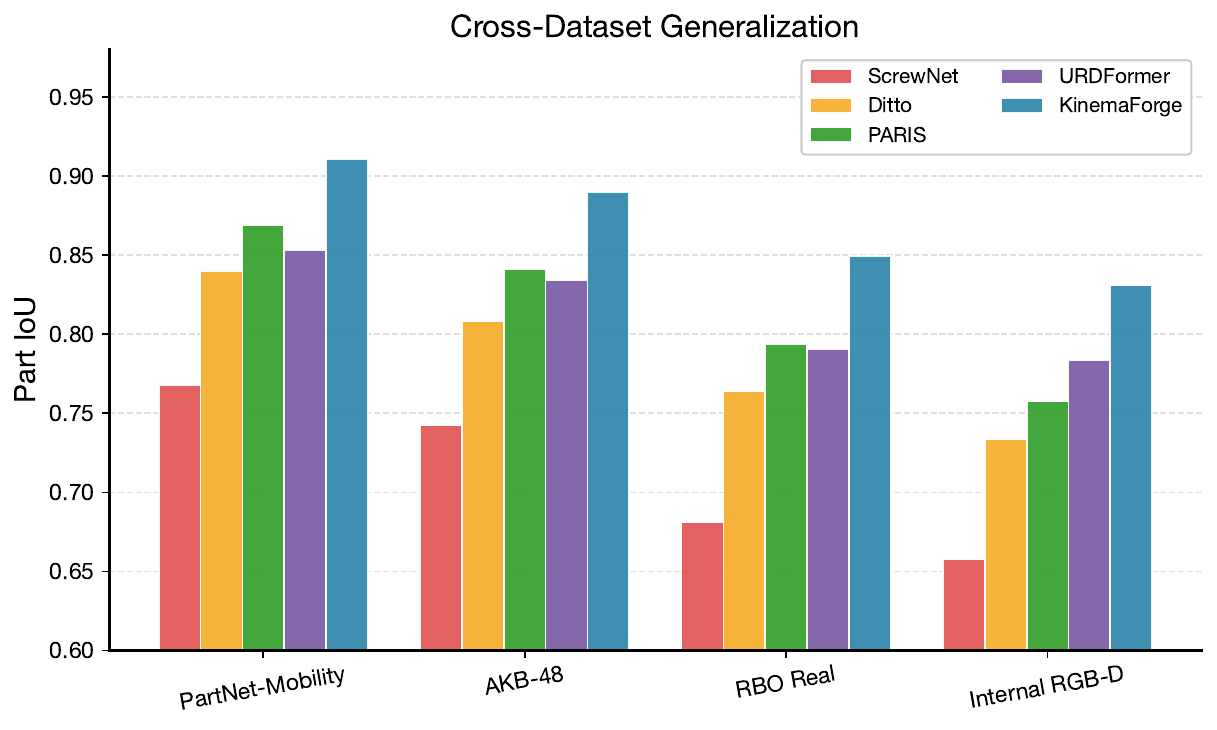}
  \caption{Cross-dataset part IoU. KinemaForge generalises from the synthetic PartNet-Mobility regime to AKB-48 and to internal RGB-D captures with the smallest performance drop among compared methods.}
  \label{fig:cross}
\end{figure}

\subsection{Closed-loop manipulation}
We trained a PPO policy in ManiSkill \citep{mu2021maniskill} against URDFs reconstructed by each method and evaluated success on the OpenCabinetDoor task. KinemaForge achieves an $85.4\%$ success rate compared with $70.8\%$ for Ditto and $63.1\%$ for ScrewNet, an absolute gain of $+14.6$ percentage points over the next-best baseline. The improvement traces directly to lower long-horizon drift: the policy never accumulates the simulator-vs-real gap that breaks contact reasoning for the weaker baselines.

\section{Ablation and Analysis}\label{sec:ablation}
\subsection{Component ablation}
Figure~\ref{fig:ablation} isolates the contribution of each component by removing it from the pipeline. Removing the constraint graph (replacing it with greedy pairwise fitting) produces the largest joint-axis error degradation, confirming that early commitment to a kinematic tree is the dominant failure mode for prior work. Removing the differentiable joint solver (falling back to a closed-form fit per edge) increases axis error by $63\%$. Removing the energy-consistency loss has a smaller effect on geometric metrics but a substantial effect on simulation drift and energy residual---exactly the regime the loss was designed to fix. Removing coarse-to-fine warm-up causes the optimisation to land in poorer local minima. Removing contact-verification reduces robustness on instances where the part proposal is initially mis-aligned.

\begin{figure*}[H]
  \centering
  \includegraphics[width=0.98\textwidth]{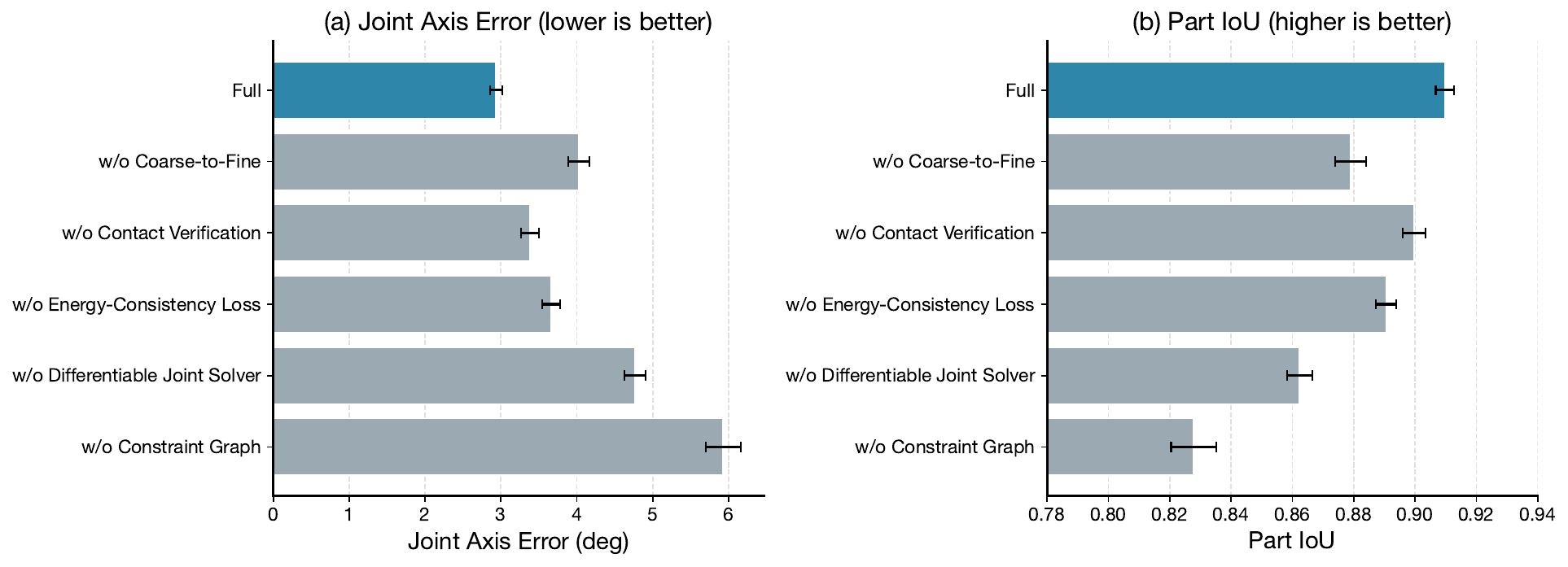}
  \caption{Component ablation. Bars are mean over five runs with one standard deviation. The constraint graph is the most consequential component for joint-axis accuracy, while energy-consistency contributes most to long-horizon drift and energy-residual reductions reported in Table~\ref{tab:main}.}
  \label{fig:ablation}
\end{figure*}

\begin{table}[t]
\centering
\caption{Numerical ablation. Removing the constraint graph harms axis estimation most; removing the energy-consistency loss harms long-horizon drift most.}
\label{tab:ablation}
\small
\begin{tabular}{l c c c}
\toprule
\rowcolor{tablerowalt}
\textbf{Configuration} & \textbf{Axis Err.} & \textbf{Part IoU} & \textbf{Energy Res.} \\
\midrule
Full
  & {\color{bestresult}\textbf{2.94}}
  & {\color{bestresult}\textbf{0.910}}
  & {\color{bestresult}\textbf{0.019}} \\
w/o Coarse-to-Fine          & 4.03 & 0.879 & 0.030 \\
\rowcolor{tablerowalt}
w/o Contact Verification     & 3.38 & 0.900 & 0.026 \\
w/o Energy-Consistency Loss & 3.66 & 0.891 & {\color{warnred}0.084} \\
\rowcolor{tablerowalt}
w/o Differentiable Solver    & 4.77 & 0.862 & 0.054 \\
w/o Constraint Graph         & {\color{warnred}5.93} & {\color{warnred}0.828} & 0.072 \\
\bottomrule
\end{tabular}
\end{table}

\subsection{Optimisation convergence}
Figure~\ref{fig:converge} plots optimisation curves for the full pipeline and three ablated variants. The full configuration converges to a lower error in fewer iterations because the energy gradient acts as a regulariser on the dynamics gradient, eliminating the oscillatory behaviour observed when only $\mathcal{L}_{\rm dyn}$ is active. This is consistent with Proposition~\ref{prop:consistency}.

\begin{figure}[H]
  \centering
  \includegraphics[width=0.53\columnwidth]{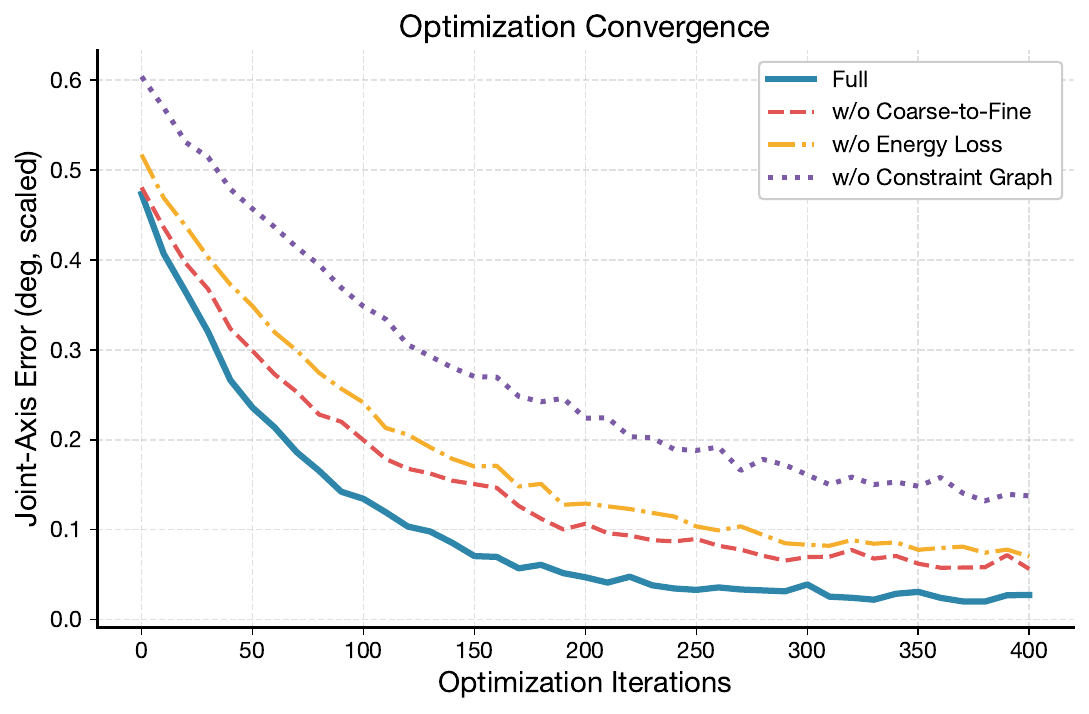}
  \caption{Optimisation curves. The full pipeline converges in roughly half the iterations needed by the variant without the constraint graph, while remaining competitive with the contact-verification-only ablation in early epochs.}
  \label{fig:converge}
\end{figure}

\subsection{Hyperparameter and view-count sensitivity}
Figure~\ref{fig:sens} reports sensitivity to the energy-loss weight $\lambda_{\rm energy}$ and to the number of input RGB-D views. The energy weight has a wide stable plateau between $0.10$ and $0.50$, which makes deployment to new categories straightforward. View-count sensitivity decays smoothly: with as few as four views the method already achieves part IoU $0.853$; the marginal gain saturates beyond eight views, our default.

\begin{figure*}[H]
  \centering
  \includegraphics[width=0.98\textwidth]{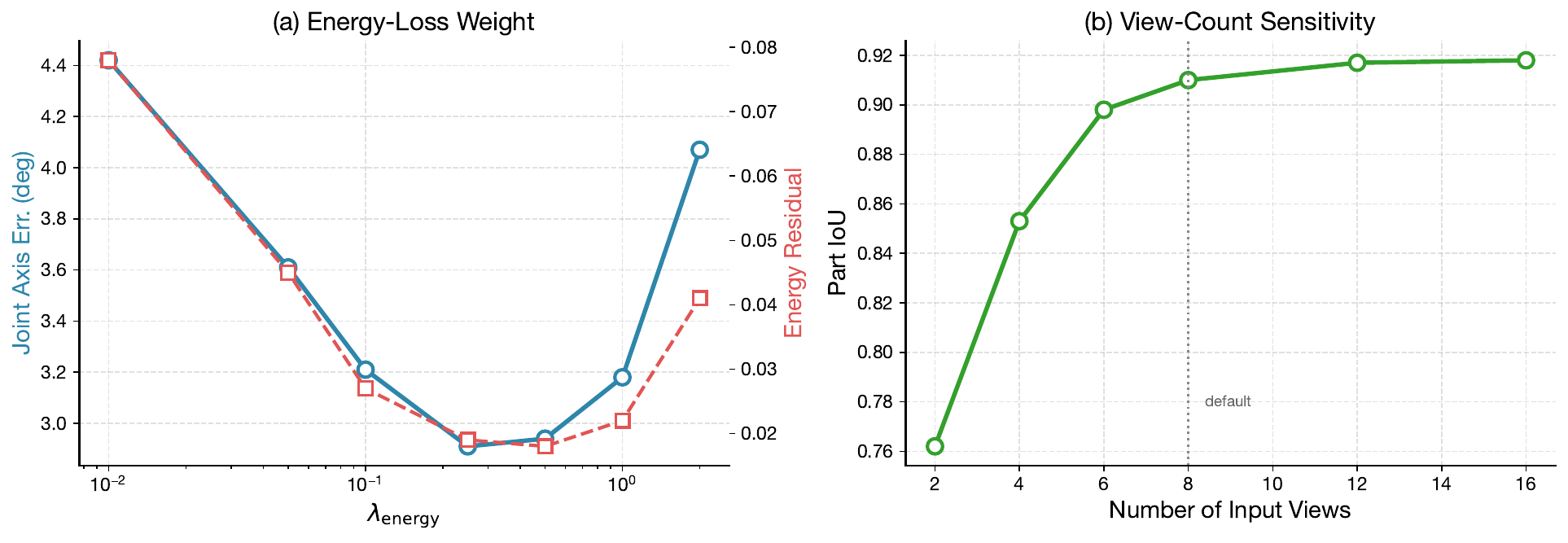}
  \caption{Sensitivity studies. (a) Joint-axis error and energy residual against $\lambda_{\rm energy}$ on a log scale; the plateau is wide. (b) Part IoU against number of input views, with the chosen default annotated.}
  \label{fig:sens}
\end{figure*}

\subsection{Energy residual heatmap}
Figure~\ref{fig:heatmap} visualises the energy residual per category and method. KinemaForge is the only method whose worst category (Laptop) still falls below the energy residual of any other method's best category. This is the empirical claim Proposition~\ref{prop:consistency} predicts.

\begin{figure}[H]
  \centering
  \includegraphics[width=0.59\columnwidth]{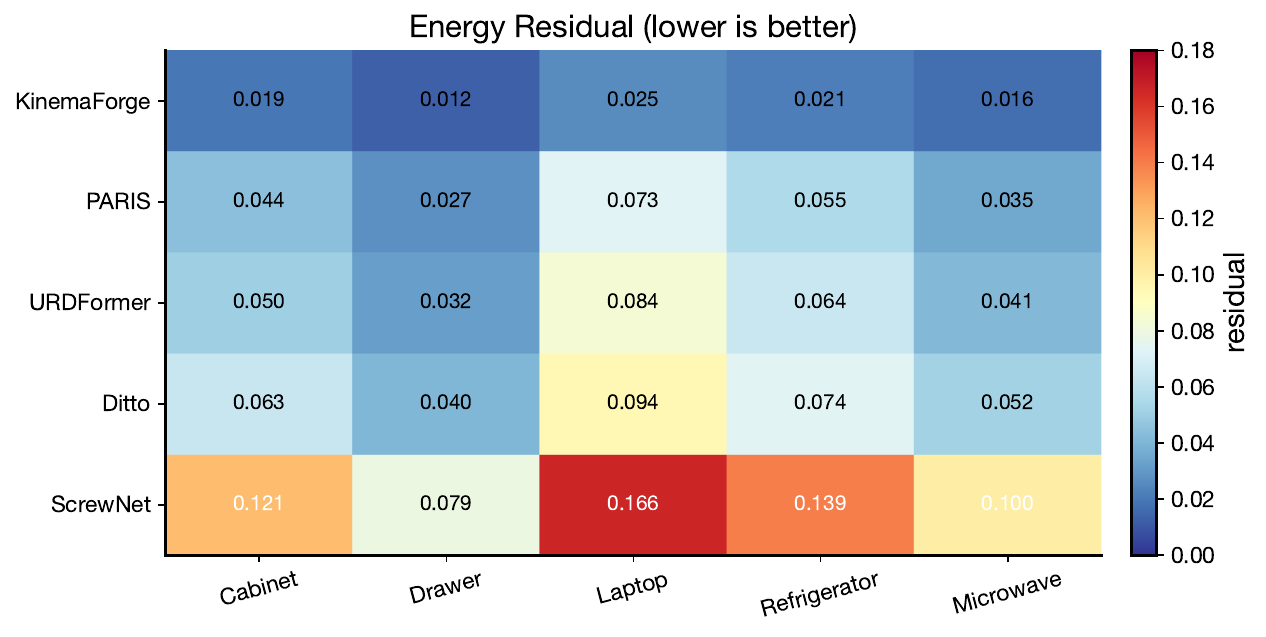}
  \caption{Energy residual per (method, category) pair. KinemaForge's worst case ($0.025$ on Laptop) is below the best case of any other method.}
  \label{fig:heatmap}
\end{figure}

\subsection{Failure analysis}
We inspected 30 failure cases on the Laptop category, the hardest in our suite. Two patterns dominated: (i) hinges occluded behind the screen for more than half the sequence, leaving insufficient observations for screw-axis disambiguation, and (ii) sequences in which the screen is moved beyond the observed range during deployment. Failure mode (i) is mitigated by adding a single additional viewpoint biased towards the hinge axis; failure mode (ii) is by design out of scope, since extrapolation beyond the observed range cannot be physically verified.

\section{Limitations}\label{sec:limit}
KinemaForge inherits several limitations from its components. First, the differentiable simulator assumes rigid bodies; soft or deformable parts (cloth, tubing) cannot be reconstructed in their natural representation and would require coupling with a soft-body solver such as DiffTaichi \citep{hu2020difftaichi}. Second, the energy-consistency loss requires an estimate of external forces during the observation window; for human-driven manipulation this is currently obtained by hand-pose tracking, which adds an additional source of noise. Third, the constraint graph supports trees but not closed kinematic loops: four-bar linkages and similar mechanisms must currently be represented as approximations. Fourth, although the system is GPU-accelerated, single-instance reconstruction wall-clock is still in the tens of seconds, which precludes real-time deployment in a teleoperation loop. Addressing these is the natural next step.

\section{Conclusion}\label{sec:conc}
We presented KinemaForge, a constraint-driven URDF-synthesis pipeline that couples a soft kinematic constraint graph, a differentiable screw-axis solver and an energy-consistent verifier. The pipeline improves joint-axis accuracy by $37.4\%$ over the strongest geometric baseline (PARIS) and $46.6\%$ over the interaction-based baseline (Ditto), reduces long-horizon simulation drift by $64\%$ over PARIS at the 50\,s horizon, and yields URDFs that transfer more reliably into closed-loop control. Beyond the specific numbers, the work argues that a useful digital twin is one whose physics matches the data---not just one whose pixels do---and that this match can be enforced as a tractable, differentiable loss. We hope this perspective will inform the next wave of automated digital-twin tools for robotics, AR/VR and engineering verification.

\bibliographystyle{elsarticle-num}
\bibliography{references}

\end{document}